\relax
\documentclass[letterpaper]{article} 
\usepackage{aaai20}  
\usepackage{times}  
\usepackage{helvet} 
\usepackage{courier}  
\usepackage[hyphens]{url}  
\usepackage{graphicx} 
\urlstyle{rm} 
\usepackage{graphicx}  
\frenchspacing  
\setlength{\pdfpagewidth}{8.5in}  
\setlength{\pdfpageheight}{11in}  

\usepackage[fleqn]{amsmath}
\usepackage[font=small,labelfont=bf]{caption}
\usepackage{bm}
\usepackage{amsthm}
\newtheorem{theorem}{Theorem}
\newcommand{\citet}[1]{\citeauthor{#1} \shortcite{#1}}
\usepackage{xcolor}
 \pdfinfo{
/Title (Hierarchical Average Reward Policy Gradient Algorithms)
/Author (Akshay Dharmavaram, Matthew Riemer, Shalabh Bhatnagar)
} 

\setcounter{secnumdepth}{0} 

%
\setlength\titlebox{2.5in} 
\title{Hierarchical Average Reward Policy Gradient Algorithms}
\author{Akshay Dharmavaram\\
Birla Institute of Technology and Science, Pilani\\
f20150039@goa.bits-pilani.ac.in\\
 +91-9604079793
\And Matthew Riemer\\
IBM Research\\
mdriemer@us.ibm.com
\And
Shalabh Bhatnagar\\
Indian Institute of Science, Bangalore\\
shalabh@iisc.ac.in
}
 \begin{document}

\maketitle

\begin{abstract}
Option-critic learning is a general-purpose reinforcement learning (RL) framework that aims to address the issue of long term credit assignment by leveraging temporal abstractions. However, when dealing with extended timescales, discounting future rewards can lead to incorrect credit assignments. In this work, we address this issue by extending the hierarchical option-critic policy gradient theorem for the average reward criterion. Our proposed framework aims to maximize the \textit{long-term} reward obtained in the steady-state of the Markov chain defined by the agent's policy. Furthermore, we use an ordinary differential equation based approach for our convergence analysis and prove that the parameters of the intra-option policies, termination functions, and value functions, converge to their corresponding optimal values, with probability \textit{one}. Finally, we illustrate the competitive advantage of learning options, in the average reward setting, on a grid-world environment with sparse rewards.
\end{abstract}

\section{Introduction}
\noindent Humans routinely employ high-level temporal abstractions for everyday decision making. \citet{oc} investigate the use of learning temporally extended abstractions in order to augment the exploration and credit assignment capabilities of the actor-critic framework. However, employing a discount factor to bound the cumulative rewards can inadvertently lead to incorrect credit assignment. We addresses this issue by extending the framework proposed by \citet{weightsharingAAAI} for the average reward (AR) criterion. 

Figure \ref{fig3}(a) is a motivating example that illustrates how simple \textit{traps} can beguile the discounted rewards (DR) framework into learning a sub-optimal credit assignment. It illustrates two different Markov chains, resulting from two disparate policies ($\pi_{R}$ and $\pi_{B}$). $\pi_{R}$ always chooses red and $\pi_{B}$ always chooses blue. When a DR-RL agent is at $S_{0}$, it has a predilection for the sub-optimal policy $\pi_{B}$, because $\forall \gamma<1$: 
\begin{footnotesize}
\begin{align*}
    v_{\pi_{R}}(S_{11})=\frac{\gamma(2-\gamma)}{(1-\gamma^{4})} < \frac{1}{(1-\gamma^{4})}= v_{\pi_{B}}(S_{21}) 
\end{align*}
\end{footnotesize}

\section{Policy-Gradient with Function Approximation} 

First, we illustrate how to extend the framework proposed by \citet{weightsharingAAAI} for the AR criterion. Apart from addressing the AR criterion, our framework also presents a simplified and intuitive approach to dealing with hierarchical option-critic algorithms \cite{hoc} by introducing the concept of $o^{0}$ and $o^{N}$.

\begin{figure}[t]
\centering
\includegraphics[width=0.9\columnwidth]{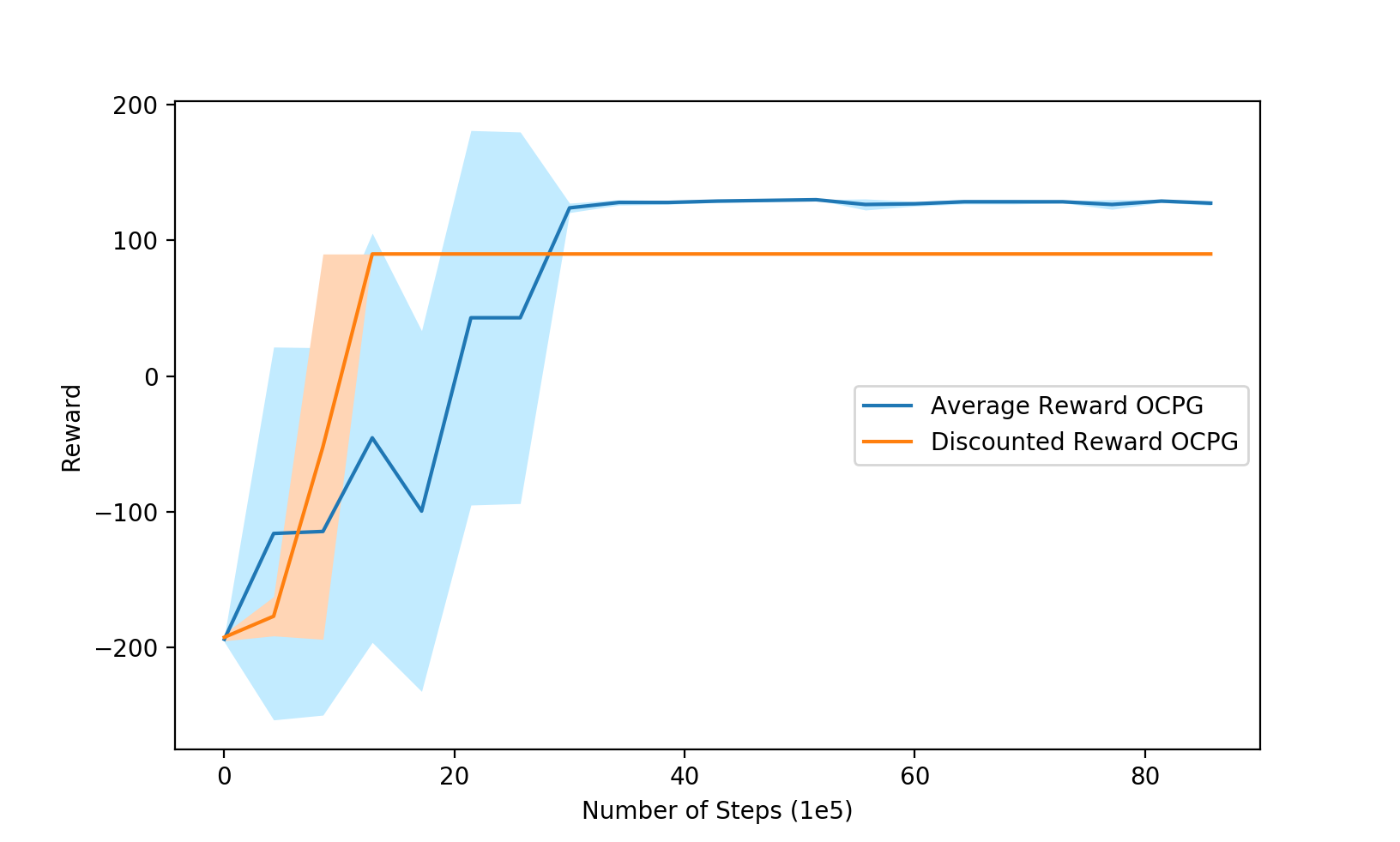} 
\caption{ Mean and standard deviations of the learning curves for the average reward and discounted reward OCPG agents, in the grid-world delivery experiment.}
\label{fig1}
\end{figure}
\begin{theorem}[Hierarchical Average Reward Option-Critic Policy Gradient (OCPG) Theorem]
\label{HARPG}
Given an $N$ level hierarchical set of  Markov  options  with  stochastic  option  policies at each level $\pi^\ell$ and termination functions at each level $\beta^\ell$ differentiable in their parameters $\bm{\theta}$, the gradient of the expected reward per step with respect to $\bm{\theta}$ is:

\begin{footnotesize}
\begin{align*}
    \sum_{s,o^{0:N-1},s'}\!\!\!\!\mu_\Omega(s,o^{0:N-1},s') 
    \bigg(\sum_{a} \frac{\partial \pi(a|s,o^{0:N-1})}{\partial \bm{\theta}} \\
    Q_U(s,o^{0:N-1},a) +  \sum_{o'^{0:N-1}} \sum_{\ell=1}^{N-1} \bigg[ \!\prod_{k=N-1}^{\ell}\!\!\!\!\beta^k(s',o^{0:k}) \\ \frac{\partial \pi^\ell(o'^\ell|s',o'^{0:\ell-1})}{\partial \bm{\theta}} Q_\Omega(s',o'^{0:\ell}) P_{\pi,\beta}(o'^{0:\ell-1}|s',o^{0:\ell-1}) \\
    - \frac{\partial \beta^\ell(s',o^{0:\ell})}{\partial \bm{\theta}} A_\Omega(s',o^{0:\ell})\!\!\!\!\prod_{k=N-1}^{\ell+1}\!\!\!\!\beta^k(s',o^{0:k}) \bigg] \bigg),
\end{align*}
\end{footnotesize}
where $\mu_\Omega$ is the stationary distribution of the Markov chain defined by the hierarchical policy, and $P_{\pi,\beta}$ is the probability while at the next state, and terminating the options for the last state, that the agent arrives at a particular new set of option selections.
\end{theorem}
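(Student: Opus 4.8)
The plan is to lift the hierarchical option-critic construction of \citet{weightsharingAAAI} to an augmented Markov chain and then run the classical average-reward policy-gradient telescoping argument on it. First I would form the augmented chain whose state is the current environment state together with the full stack of active options (using the boundary symbols $o^{0}$ and $o^{N}$ introduced above so that the root and primitive levels can be treated uniformly, and including the successor state $s'$ so that the termination/selection step is itself Markov); under the usual unichain assumption this chain has a unique stationary distribution $\mu_\Omega$, and the expected reward per step $\rho(\bm{\theta})$ is well defined and independent of the start state. I would then record the coupled average-reward Bellman (Poisson) equations: for the differential option-value functions $Q_\Omega(s,o^{0:\ell})$ written through the primitive-action values $Q_U(s,o^{0:N-1},a)$, for $Q_U$ in terms of $\rho(\bm{\theta})$ and the differential value of the successor augmented state, and for the definition $A_\Omega(s,o^{0:\ell}) = Q_\Omega(s,o^{0:\ell}) - V_\Omega(s,o^{0:\ell-1})$ of the advantage of continuing option $o^{\ell}$ versus the value of the re-selection that takes over when it terminates.

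Next I would differentiate these identities with respect to $\bm{\theta}$. Every transition term splits into a piece in which $\partial/\partial\bm{\theta}$ lands on a policy or termination factor and a piece in which it is handed on to the gradient of the value of the next augmented state. Collecting the first kind of piece across all levels reproduces exactly the bracketed summand of the theorem: the factor $\prod_{k=N-1}^{\ell}\beta^{k}(s',o^{0:k})$ appears because a fresh option can be drawn at level $\ell$ only once every lower-level option has terminated; $P_{\pi,\beta}(o'^{0:\ell-1}\mid s',o^{0:\ell-1})$ records the probability of the particular higher-level re-selection that conditions $Q_\Omega(s',o'^{0:\ell})$; and each $\partial\beta^{\ell}/\partial\bm{\theta}$ term is paired with $-A_\Omega$ because $\beta^{\ell}$ enters its Bellman equation as a convex combination $\beta^{\ell}(\text{value after re-selection}) + (1-\beta^{\ell})(\text{value of continuing})$, whose derivative in $\beta^{\ell}$ is $V_\Omega - Q_\Omega = -A_\Omega$, weighted by $\prod_{k=N-1}^{\ell+1}\beta^{k}(s',o^{0:k})$ since all levels above $\ell$ must also have terminated for $\beta^{\ell}$ to be consulted.

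Finally I would multiply the differentiated differential-value recursion by $\mu_\Omega(s,o^{0:N-1},s')$ and sum over the whole augmented space. Stationarity, $\sum \mu_\Omega P_{\pi,\beta} = \mu_\Omega$, makes the handed-on $\partial V/\partial\bm{\theta}$ terms on the two sides cancel; the identity $\sum \mu_\Omega = 1$ collapses the $-\partial\rho(\bm{\theta})/\partial\bm{\theta}$ contributions to a single copy; and what remains is precisely the claimed expression for $\partial\rho(\bm{\theta})/\partial\bm{\theta}$. Interchanging $\partial/\partial\bm{\theta}$ with the (finite) sums is legitimate by finiteness of the state and option sets together with the assumed differentiability of each $\pi^{\ell}$ and $\beta^{\ell}$.

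I expect the main obstacle to be the hierarchical bookkeeping in the middle step: proving that the termination-and-re-selection dynamics produce exactly the products $\prod_{k=N-1}^{\ell}\beta^{k}$ and the conditional factors $P_{\pi,\beta}$, with no off-by-one slip in the level indices $\ell$ and $\ell+1$, rather than any genuine analytic difficulty. A secondary point, specific to the average-reward criterion, is that there is no discount-induced contraction guaranteeing solvability of the value recursion; one must instead use the unichain assumption so that the Poisson equation is solvable up to an additive constant, and then check that the telescoping cancellation is insensitive to that constant --- which it is, since $\sum_{x}\mu_\Omega(x)\,c$ occurs identically on both sides.
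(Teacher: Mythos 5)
Your proposal is correct and follows essentially the same route as the paper's proof: differentiate the average-reward Bellman recursions for $Q_\Omega$, $Q_U$, and the upon-arrival value function, collect the terms where the gradient lands on a $\pi^\ell$ or $\beta^\ell$ factor (yielding the $\prod\beta^k$ bookkeeping, the $P_{\pi,\beta}$ factor, and the $-A_\Omega$ pairing via the generalized advantage), then sum against the stationary distribution so the handed-on $\partial Q_\Omega/\partial\bm{\theta}$ terms cancel and the $-\partial J(\pi)/\partial\bm{\theta}$ contributions from $Q_U$ collapse to the claimed expression. Your added remarks on unichain solvability of the Poisson equation and insensitivity to the additive constant are sound technical points the paper leaves implicit.
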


\begin{proof}
The proof for this theorem is in the Appendix.
\end{proof}

\section{Two-Timescale Convergence}
Next, we prove that the aforementioned parameters, $\theta$, asymptotically converge to their optimal values, when employing a linear approximation $\forall$ $Q_\Omega$. 
We analyze our framework using the ordinary differential equation (ODE) approach, delineated by \citet{bhatnagar2009natural}, and study its asymptotic properties using the fixed points of the derived ODE.


\begin{theorem}[Convergence Proof]
\label{conv}
For the parameter iterations of the global set of shared parameters defined in Algorithm 1, we have ($\hat{J}_{t}, \upsilon_{t}, \theta_{t} $) $\to$ $\{(J(\theta^{*})_{t}, \upsilon^{*}, \theta^{*} )|\theta^{*} \in \mathcal{Z}\}$ as t $\to \infty$ with probability one, where $\mathcal{Z}$ corresponds to the set of local maxima of a performance function whose gradient is $E[\delta^{\pi}_{t}\psi(s_{t},a_{t})|\theta]$
\end{theorem}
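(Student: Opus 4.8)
The plan is to recognize the three recursions for $(\hat{J}_t,\upsilon_t,\theta_t)$ as a two-timescale stochastic approximation scheme and to analyze them with the ODE method of \citet{bhatnagar2009natural} (and Borkar's associated two-timescale theory). I would first fix the standing hypotheses: step sizes $a_t$ (driving $\hat{J}_t,\upsilon_t$) and $b_t$ (driving $\theta_t$) with $\sum_t a_t=\sum_t b_t=\infty$, $\sum_t(a_t^2+b_t^2)<\infty$ and $b_t/a_t\to 0$, so that $\theta_t$ is quasi-static relative to the fast iterates; irreducibility and aperiodicity of the Markov chain over the augmented state $(s,o^{0:N-1})$ for every fixed $\theta$, with unique stationary distribution $\mu_\Omega$; full column rank of the feature matrix $\Phi$ in the linear parametrization $Q_\Omega\approx\phi(\cdot)^{\top}\upsilon$; and almost-sure boundedness of the iterates, which I would obtain from the Borkar--Meyn criterion by checking that the scaled (``at infinity'') limits of the ODEs below are globally asymptotically stable at the origin.

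Next, the fast timescale. With $\theta$ held fixed, the TD increments decompose into their mean field plus a square-integrable martingale-difference sequence whose contribution vanishes asymptotically; by the Kushner--Clark lemma the pair $(\hat{J}_t,\upsilon_t)$ then tracks the ODE $\dot{\hat{J}}=J(\theta)-\hat{J}$, $\dot{\upsilon}=\Phi^{\top}D_\theta\big(r_\theta-J(\theta)\mathbf{1}+(P_\theta-I)\Phi\upsilon\big)$, where $D_\theta$ is the diagonal matrix of stationary probabilities and $r_\theta,P_\theta$ are the reward vector and transition kernel of the option-augmented chain. Irreducibility/aperiodicity together with the rank condition make $\Phi^{\top}D_\theta(P_\theta-I)\Phi$ negative definite, so this linear system has a unique globally asymptotically stable equilibrium $(J(\theta),\upsilon^{*}(\theta))$; hence $\hat{J}_t-J(\theta_t)\to 0$ and $\upsilon_t-\upsilon^{*}(\theta_t)\to 0$ almost surely. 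The content imported from Theorem~\ref{HARPG} enters here: at $\upsilon=\upsilon^{*}(\theta)$ the expected TD error satisfies $E[\delta^{\pi}_t\psi(s_t,a_t)\mid\theta]=\nabla_\theta J(\theta)$, so the actor increment is, up to vanishing noise, an unbiased estimate of the true average-reward gradient.

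Then the slow timescale. Substituting the equilibrated $(\hat{J}_t,\upsilon_t)$, the $\theta$-recursion becomes a stochastic approximation with bounded, Lipschitz mean field $E[\delta^{\pi}_t\psi(s_t,a_t)\mid\theta]=\nabla_\theta J(\theta)$ and asymptotically negligible noise, so it tracks $\dot{\theta}=\nabla_\theta J(\theta)$ (composed with the projection onto the parameter set, if one is used). Since $J$ is continuously differentiable in $\theta$ (ergodicity plus differentiability of the $\pi^\ell$ and $\beta^\ell$) and is itself a Lyapunov function with $\dot{J}=\|\nabla_\theta J\|^{2}\ge 0$, every internally chain transitive invariant set of this ODE lies in $\mathcal{Z}=\{\theta:\nabla_\theta J(\theta)=0\}$, and the asymptotically stable ones are the local maxima. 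The two-timescale convergence theorem then delivers $(\hat{J}_t,\upsilon_t,\theta_t)\to\{(J(\theta^{*}),\upsilon^{*}(\theta^{*}),\theta^{*}):\theta^{*}\in\mathcal{Z}\}$ with probability one.

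The main obstacle is twofold. The first difficulty is the stability argument: because the average-reward TD error couples $\hat{J}$ and $\upsilon$, verifying the Borkar--Meyn condition requires a careful look at the joint fast ODE, its reward-centering, and the rank of $\Phi$, rather than a black-box appeal. The more substantive point is establishing the gradient identity $E[\delta^{\pi}_t\psi]=\nabla_\theta J$ at the critic's fixed point within the $N$-level hierarchy: this needs a compatibility-type relation between the critic features $\phi$ and the actor features $\psi$ at every level of option policies and termination functions, so that the linear-approximation bias in the policy-gradient expression of Theorem~\ref{HARPG} --- with its products $\prod_k\beta^k(s',o^{0:k})$ of termination probabilities and the option-transition kernel $P_{\pi,\beta}$ --- cancels. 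Most of the remaining effort is the routine but lengthy bookkeeping over this augmented option hierarchy.
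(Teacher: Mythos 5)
Your proposal follows the same overall strategy as the paper --- the two-timescale ODE method of \citet{bhatnagar2009natural}, with the critic on the fast timescale, the actor tracking a gradient ODE on the slow one, and a martingale-difference decomposition of the noise --- but the two arguments part ways on one substantive point: how the linear function-approximation bias is handled. You propose to establish the exact identity $E[\delta^{\pi}_{t}\psi(s_{t},a_{t})\mid\theta]=\nabla_\theta J(\theta)$ at the critic's fixed point via a compatibility condition between actor and critic features, and you correctly flag this as the hard step. The paper never attempts this: it writes the mean field as $h(\theta_t)=\nabla J(\pi)+e^{\pi_t}$ with $\sup_{\pi_t}\|e^{\pi_t}\|<\delta$, uses Gronwall's inequality to show the interpolated iterates are a $(T,\Delta+T\delta)$ perturbation of $\dot{\theta}=\hat{\Gamma}(\nabla J(\pi))$, and then invokes Lemma 6 of \citet{bhatnagar2009natural}. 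This is why the theorem statement defines $\mathcal{Z}$ as the local maxima of a performance function whose gradient is $E[\delta^{\pi}_{t}\psi(s_{t},a_{t})\mid\theta]$ rather than of $J$ itself --- the bias is absorbed into the statement instead of being cancelled. Your route, if the compatible-features relation can actually be verified across all levels of intra-option policies and termination functions, would yield a strictly stronger conclusion (convergence to stationary points of $J$), at the cost of an assumption the paper does not make. Conversely, you do considerably more honest work on the fast timescale (the negative definiteness of $\Phi^{\top}D_\theta(P_\theta-I)\Phi$ and the Borkar--Meyn stability check), where the paper simply asserts that the critic converges faster so that $E[\Psi-\Psi^{\pi}\mid F_2(t)]=o(1)$; and note that the paper carries out its argument only for $N=2$ and relies on a projection $\Gamma$ plus an explicit boundedness assumption rather than a stability criterion.
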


\begin{proof}
The proof for this theorem is in the Appendix.
\end{proof}

\begin{figure}[t]
\centering
\includegraphics[width=0.9\columnwidth]{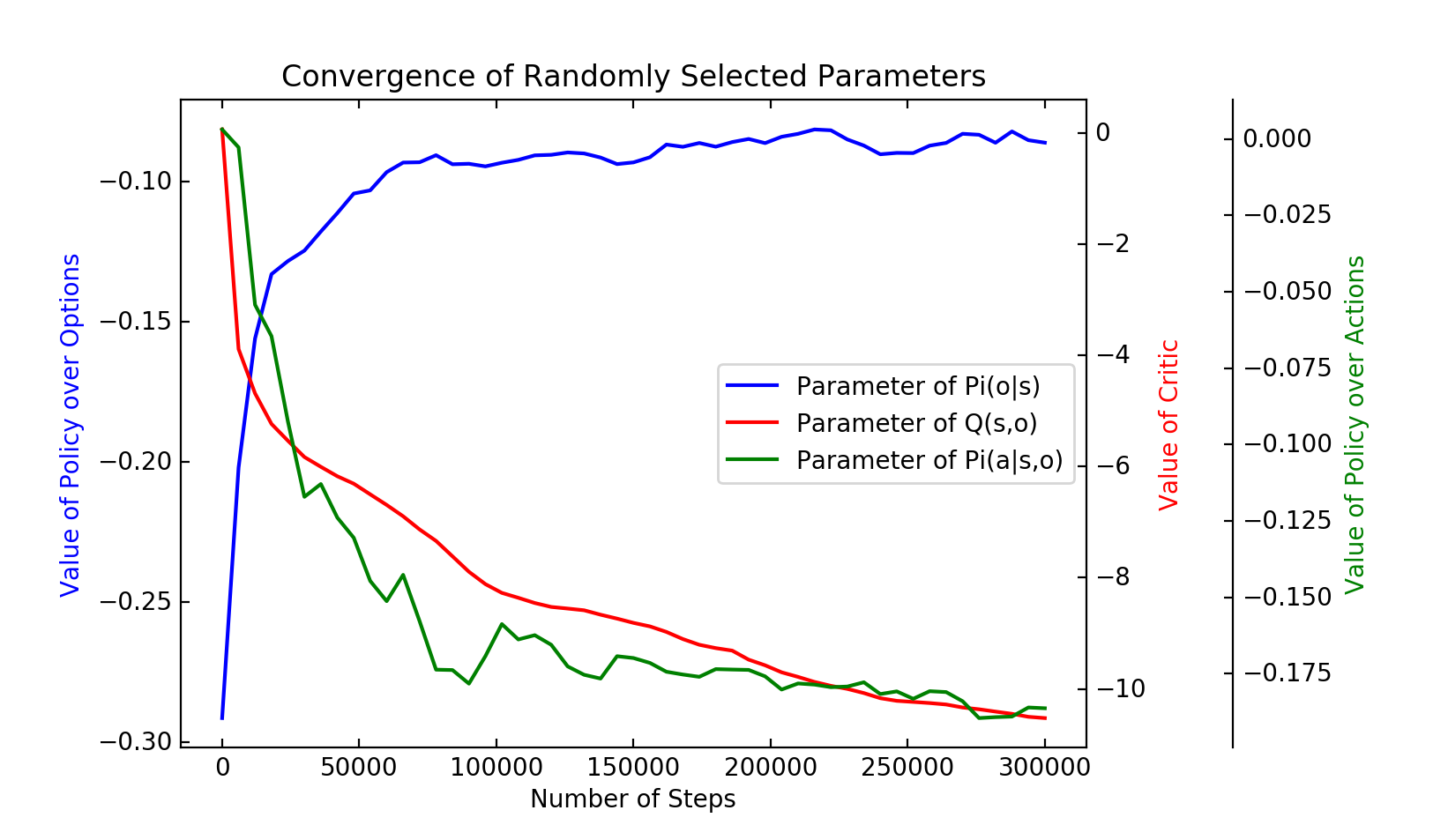} 
\caption{An empirical demonstration illustrating the convergence of the parameters of $Q(s,o)$, $\pi(a|s,o)$, and $\pi(o|s)$. We have randomly selected one parameter from each function approximator and plotted its value against the number of steps.}
\label{fig2}
\end{figure}


\section{Empirical Results}
Finally, we look at the susceptibility of our framework to traps, and compare it to the DR setting proposed by \citet{weightsharingAAAI}. Figure \ref{fig3}(b) depicts a grid world environment characterized by sparse rewards. An agent must navigate to either one of the pickup locations, $P_{1}$ or $P_{2}$, in order to retrieve a parcel; and must subsequently deliver the parcel to the drop off location. The agent gets a reward of +100 for every parcel from $P_{2}$, and +50 for every parcel from $P_{1}$. The optimal policy for an agent would naturally involve picking up the parcels from $P_{2}$. 

We introduce a trap\footnote[1]{The reward of +20 was primarily chosen for illustrating the potential pitfalls when employ a $\gamma \leq 0.9$. Similar traps can be created for any $\gamma \leq 1$.} at the green-blue junction to entice the DR-RL agents into picking up the parcels from $P_{1}$. Once the agent reaches the blue zone, it obtains a reward of +20 as opposed to a reward of +10 at the red-green junction. In Figure \ref{fig1}, we plot the rewards obtained per cycle for both the AR-RL agent and a DR-RL agent, and show that the hierarchical AR policy gradient performs better than its DR counterpart proposed by \citet{weightsharingAAAI}. Finally, we illustrate the asymptotic convergence of the actor and critic parameters in Figure \ref{fig2}. 

\begin{figure}[t]
\centering
\includegraphics[width=\columnwidth]{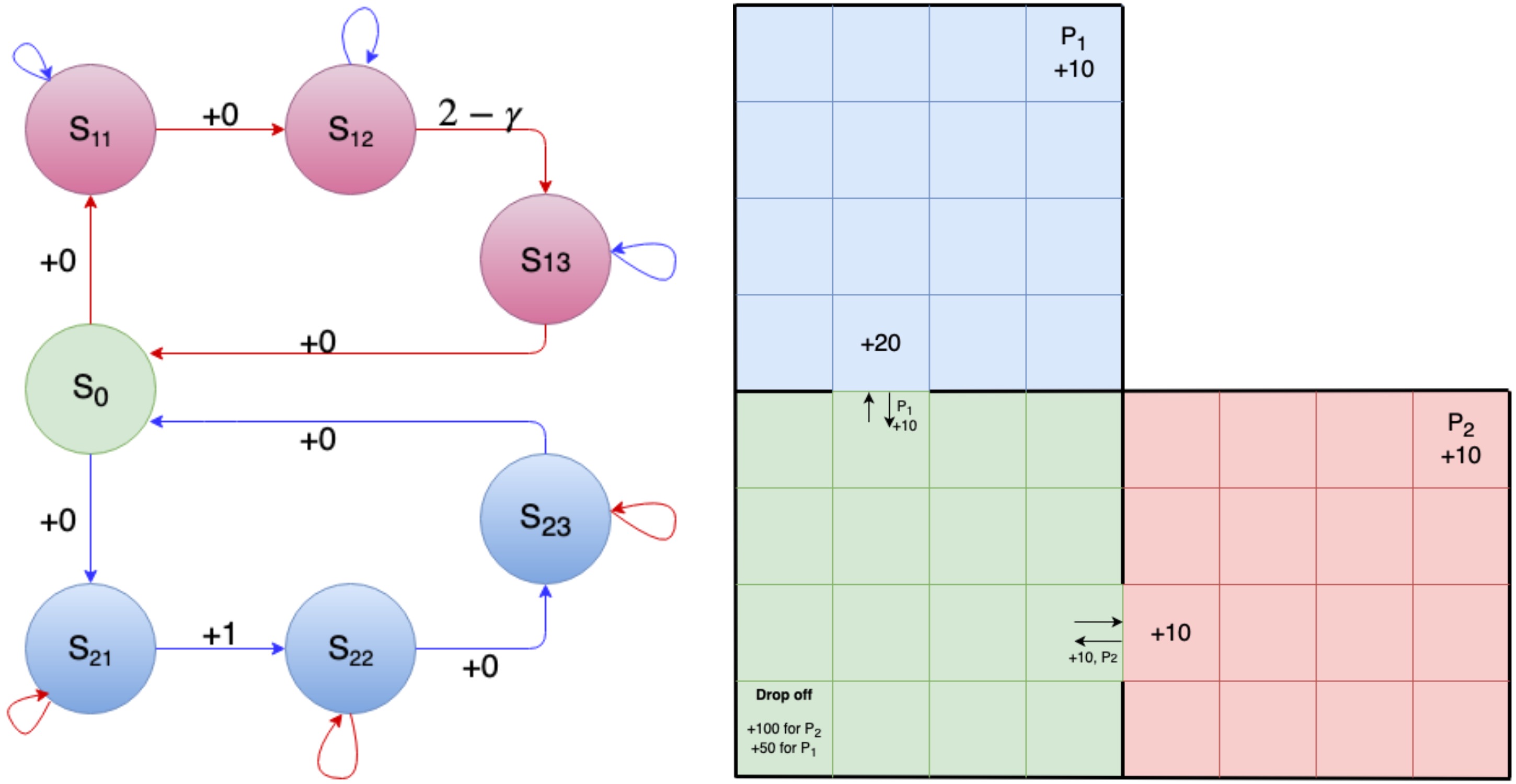} 
\caption{ \textbf{(a)} A \textit{trap} that employs delayed rewards to fool DR-RL agents into learning incorrect credit assignments. \textbf{(b)} A grid-world navigation experiment where the reward at the drop off point depends upon which pickup location was previously visited (50 for $P_1$ and 100 for $P_2$). The trap at the blue-green junction misguides agents towards the sub-optimal pickup location, $P_1$. }
\label{fig3}
\end{figure}

\section{Conclusion and Future Work}
In this work, we propose a novel method for maximizing the long term steady-state reward, by learning intra-option policies, termination functions, and value functions end-to-end. These algorithms can be used in infinite-horizon control problems that exhibit an inherent cyclic structure, like inventory-management, queuing and traffic light control. A detailed empirical analysis for a cyclical infinite-horizon application would be necessary to demonstrate the viability of our approach in complex environments. Additionally, while the proofs provided here leverage a linear approximation for each of the $Q_\Omega(s,o^{0:\ell})$, it would also be interesting to investigate the convergence properties of a non-linear critic.

\bibliographystyle{aaai}
\bibliography{bibliography}

\appendix
\section{Appendix}
Here, we provide the proofs for our theorems \footnote{See a one column format of this paper at }, as well as some extra analysis regarding a selected few topics. \textbf{These proofs are our major contributions, and are placed in the appendix solely due to space constraints.}

\subsection{Options $o^{0}$ and $o^{N}$}
Our hierarchical architecture builds upon the framework proposed by \cite{hoc}. All intra-option policies are of the form $\pi_{\theta^\ell}^\ell(o^{\ell}|s,o^{0:\ell-1})$ $\forall \ell \in$ {1,2,...N}, and each level of the option hierarchy has a complimentary termination function $\beta_{\phi^{\ell}}^{\ell}(s,o^{0:\ell})$. $o^{0}$ unifies under one umbrella: \textbf{(1)} $\pi(o|s)$ and $\pi^{\ell}(o^{\ell}|s,o^{0:\ell-1})$, and \textbf{(2)} $Q_\Omega(s,o^{0:\ell-1})$ and $V_\Omega(s')$; which were considered as disparate terms in prior work. 

$o^N$ corresponds to the primitive actions, and intuitively follows a termination policy with $\beta_{\phi^{N}}^{N}(s,o^{0:N})=1$. On the other hand, $o^0$ corresponds to a \textit{super-option}, and follows a termination policy with $\beta_{\phi^{0}}^{0}(s,o^{0})=0$. Instead of visualizing the agent as an external entity picking a starting option, $o^{1}$, the agent can be thought of as executing a \textit{super-option} which never terminates. Apart from leading to shorter equations and proofs, this framework naturally leads to the idea of \textit{stacking} option-hierarchies, and the intuition that the agent is part of a deeper network of hierarchies. This approach could lead to novel avenues of research.

We make the following changes to the HOC framework:
\textbf{(1)} $Q_U: \mathcal{S} \times \Omega \times \mathcal{A} \rightarrow \mathcal{R}$ is redefined to address the average reward criterion:

\vspace{-5mm}
\begin{equation}\label{eq:QU}
\begin{aligned}
Q_U(s,o^{1:\ell}) = \sum_{o^{\ell+1:N}}\prod_{j=\ell+1}^{N}\pi^{j}(o^{j}|s,o^{1:j-1})\Bigg[r(s,o^{N}) \\
- J(\pi) + \sum_{s'} P(s'|s,o^{N})U(s',o^{1:\ell-1})\Bigg]
\end{aligned}
\end{equation}
\vspace{-4mm}

\textbf{(2)} The new unified upon-arrival value-function presented below has two terms, instead of four.

\begin{align*}
U(s',o^{0:\ell-1}) =\\ \underbrace{Q_\Omega(s',o^{0:\ell-1}) \sum_{q=\ell-1}^{N-1}(1-\beta^{q}(s',o^{0:q}))  \prod_{k=q+1}^{N} \beta^k(s',o^{0:k}) }_{\text{lower level options terminate }}\\
+ \underbrace{ \sum_{i=0}^{\ell-2}(1-  \beta^i(s',o^{0:i})) Q_\Omega(s',o^{0:i}) \prod_{k=i+1}^{N} \beta^k(s',o^{0:k}) }_{\text{higher level options terminate}}
\end{align*}

\subsection{Hierarchical Average Reward Policy Gradient} \label{HOCPGProof}

We begin by presenting a few preliminary expressions, based on the theorems delineated by \cite{hoc}, which form the basis of our subsequent proofs. 

If $o_t^{0:N-1}$ is executing at time $t$, then the discounted probability of transitioning to $(s_{t+1},o_{t+1}^{0:N-1})$ is:

\begin{equation} 
\begin{split}
P^{(1)}(s_{t+1},o_{t+1}^{0:N-1}|s_t,o_t^{0:N-1}) =\\
\sum_a \pi(a|s_t,o_t^{0:N-1})  P(s_{t+1}|s_t,a) \bigg( \\ (1 - \beta^{N-1}(s_{t+1},o_t^{0:N-1}))\textbf{1}_{o_{t+1}^{0:N-1} =o_t^{0:N-1}} \\ +
\sum_{i=0}^{N - 2}(1-\beta^i(s',o^{0:i})) \textbf{1}_{o_{t+1}^{0:i}=o_t^{0:i}} \bigg[\\
\prod_{k=N-1}^{i+1} \beta^k(s',o_t^{0:k}) \pi^k(o_{t+1}^k|s_{t+1},o_{t+1}^{0:k-1}) \bigg] \bigg).
\end{split}
\end{equation}

The discounted probabilities for k-steps can more generally be expressed recursively: 

\begin{equation} 
\begin{split}
P^{(k)}(s_{t+k},o_{t+k}^{0:N-1}|s_t,o_{t}^{0:N-1}) = \\ \sum_{s_{t+1},o_{t+1}^{0:N-1}} \bigg( P^{(1)}(s_{t+1},o_{t+1}^{0:N-1}|s_t,o_{t}^{0:N-1}) \\ P^{(k-1)}(s_{t+k},o_{t+k}^{0:N-1}|s_{t+1},o_{t+1}^{0:N-1}) \bigg).
\end{split}
\end{equation}

Next, we define $Q_\Omega$, and $U$ for state $s$ and active options $o^{0:N-1}$ directly following \cite{hoc}. The option-value function $Q_\Omega$ can be expressed as:

\begin{equation} 
Q_\Omega(s,o^{0:N-1}) =\\
\sum_{a} \pi^N(a|s,o^{0:N-1})Q_U(s,o^{0:N-1},a).
\end{equation}

We incorporate the average reward optimality criterion into the definition of  $Q_U$, the value of executing an action in the presence of the currently active options, as: 

\begin{equation} 
\begin{split}
Q_U(s,o^{0:N-1},a) = r(s,a) - J(\pi) + \\
\sum_{s'} P (s'|s,a)U(s',o^{0:N-1}).
\end{split}
\end{equation}

We also follow the option value function upon arrival $U$ from \cite{hoc}. 

\begin{align*}
U(s',o^{0:N-1}) =\\
\sum_{i=0}^{N-1}(1-  \beta^i(s',o^{0:i})) Q_\Omega(s',o^{0:i}) \prod_{k=i+1}^{N} \beta^k(s',o^{0:k}) 
\end{align*}

We can now follow a similar procedure as the one explored \cite{weightsharingAAAI}, and take the derivative of $Q_\Omega(s,o^{0:N-1})$ with respect to $\bm{\theta}$. By incorporating the notion of $o^{0}$ into our equations, we were able to significantly reduce the complexity of the following equations. 

\begin{equation}  \label{QOmega}
\begin{split}
\frac{\partial Q_\Omega(s,o^{0:N-1})}{\partial \bm{\theta}} =\\
\frac{\partial}{\partial \bm{\theta}}\sum_{a} \pi^N(a|s,o^{0:N-1})Q_U(s,o^{0:N-1},a) \\ 
= \sum_{a} \frac{ \partial \pi^N(a|s,o^{0:N-1})}{\partial \bm{\theta}} Q_U(s,o^{0:N-1},a)
+ \\
\sum_{a} \pi^N(a|s,o^{0:N-1}) \frac{\partial Q_U(s,o^{0:N-1},a)}{\partial \bm{\theta}},
\end{split}
\end{equation}

Next, we take the derivative of $Q_U(s,o^{0:N-1},a)$ and $U(s',o^{0:N-1})$ with respect to $\bm{\theta}$:

\begin{equation} 
\begin{split}
\frac{ \partial Q_U(s,o^{0:N-1},a)}{\partial \bm{\theta}} =  \sum_{s'} P (s'|s,a) \frac{\partial U(s',o^{0:N-1})}{\partial \bm{\theta}}\\
- \frac{\partial J(\pi)}{\partial \bm{\theta}},
\end{split}
\end{equation}

\begin{equation} \label{eq31}
\begin{split}
\frac{\partial U(s',o^{0:N-1})}{\partial \bm{\theta}} = \sum_{i=0}^{N - 1}(1-\beta^i(s',o^{0:i})) Q_\Omega(s',o^{0:i}) \bigg(\\
\sum_{j=i+1}^{N-1} \frac{ \partial \beta^j(s',o^{0:j})}{\partial \bm{\theta}} \prod_{\substack{k=N \\ k \neq j}}^{i+1} \beta^k(s',o^{0:k}) \bigg)\\
- \sum_{i=0}^{N - 1}\frac{ \partial \beta^i(s',o^{0:i})}{\partial \bm{\theta}} Q_\Omega(s',o^{0:i}) \prod_{k=N}^{i+1} \beta^k(s',o^{0:k}) \\
+ \sum_{i=0}^{N - 1}(1-\beta^i(s',o^{0:i})) \frac{\partial Q_\Omega(s',o^{0:i})}{\partial \bm{\theta}} \prod_{k=N}^{i+1} \beta^k(s',o^{0:k}).
\end{split}
\end{equation}

Likewise, we can define the option-value function $Q_\Omega(s,o^{0:i})$ by integrating out the option-value function using the policy over options at each layer:

\begin{equation} 
Q_\Omega(s,o^{0:i}) = \sum_{o'^{i+1:N-1}} \prod_{i=i+1}^{N-1} \pi^i(o^i|s,o^{0:i-1}) Q_\Omega(s,o^{0:N-1}).
\end{equation}

We now take the gradient to obtain:

\begin{equation}   \label{eq35}
\begin{split}
\frac{\partial Q_\Omega(s,o^{0:i})}{\partial \bm{\theta}} = \sum_{o^{i+1:N-1}} \bigg( \prod_{j=i+1}^{N-1} \pi^j(o^j|s,o^{0:j-1}) \frac{\partial Q_\Omega(s,o^{0:N-1})}{\partial \bm{\theta}} \\
+ \sum_{j=i+1}^{N-1} \frac{ \partial \pi^j(o^j|s,o^{0:j-1})}{\partial \bm{\theta}} \prod_{\substack{k=i+1 \\ k \neq j}}^{N-1} \pi^k(o^k|s',o^{0:k-1}) Q_\Omega(s,o^{0:N-1}) \bigg).
\end{split}
\end{equation}

We can now simplify our original expression of $U(s',o^{0:N-1})$ by substituting the values of the.

\begin{footnotesize}
\begin{equation} 
\begin{split}
\frac{\partial U(s',o^{0:N-1})}{\partial \bm{\theta}} = \bigg( (1-\beta^{N-1}(s',o^{0:N-1})) \textbf{1}_{o'^{0:N-1}=o^{0:N-1}} \\ 
+ \sum_{i=0}^{N - 2}(1-\beta^i(s',o^{0:i})) \textbf{1}_{o'^{0:i}=o^{0:i}} \bigg[ \\ \prod_{k=N-1}^{i+1} \beta^k(s',o^{0:k}) \pi^k(o'^k|s',o'^{0:k-1}) \bigg] \bigg) \frac{ \partial Q_\Omega(s',o'^{0:N-1})}{\partial \bm{\theta}}\\
+ \sum_{i=0}^{N - 1}(1-\beta^i(s',o^{0:i})) \prod_{k=N}^{i+1} \beta^k(s',o^{0:k}) \sum_{o^{i+1:N-1}}  \sum_{j=i+1}^{N-1} \bigg[\\ \frac{ \partial \pi^j(o^j|s',o^{0:j-1})}{\partial \bm{\theta}} \prod_{\substack{k=i+1 \\ k \neq j}}^{N} \pi^k(o^k|s',o^{0:k-1}) Q_\Omega(s',o^{0:N-1}) \bigg]\\
- \sum_{i=0}^{N - 1}\frac{ \partial \beta^i(s',o^{0:i})}{\partial \bm{\theta}} Q_\Omega(s',o^{0:i}) \prod_{k=N}^{i+1} \beta^k(s',o^{0:k}) \\
+ \sum_{i=0}^{N - 1}(1-\beta^i(s',o^{0:i})) Q_\Omega(s',o^{0:i}) \bigg(\\
\sum_{j=i+1}^{N-1} \frac{ \partial \beta^j(s',o^{0:j})}{\partial \bm{\theta}} \prod_{\substack{k=N \\ k \neq j}}^{i+1} \beta^k(s',o^{0:k}) \bigg).
\end{split}
\end{equation}
\end{footnotesize}

We now substitute this last expression into equation \ref{QOmega}:

\begin{equation} 
\begin{split}
\frac{\partial Q_\Omega(s,o^{0:N-1})}{\partial \bm{\theta}} + \frac{\partial J(\pi)}{\partial \bm{\theta}}\\
= \sum_{a} \frac{ \partial \pi^N(a|s,o^{0:N-1})}{\partial \bm{\theta}} Q_U(s,o^{0:N-1},a) \\
+  \sum_{s'} \sum_{o'^{0:N-1}} P^{(1)}(s',o'^{0:N-1}|s,o^{0:N-1}) \frac{\partial Q_\Omega(s',o'^{0:N-1})}{\partial \bm{\theta}} \\
+ \sum_{a} \pi^N(a|s,o^{0:N-1})  \sum_{s'} P (s'|s,a) \bigg[ \\
\sum_{i=0}^{N - 1}(1-\beta^i(s',o^{0:i})) \prod_{k=N}^{i+1} \beta^k(s',o^{0:k}) \sum_{o^{i+1:N-1}}  \sum_{j=i+1}^{N-1} \bigg[\\ \frac{ \partial \pi^j(o^j|s',o^{0:j-1})}{\partial \bm{\theta}} \prod_{\substack{k=i+1 \\ k \neq j}}^{N} \pi^k(o^k|s',o^{0:k-1}) Q_\Omega(s',o^{0:N-1}) \bigg]\\
- \sum_{i=0}^{N - 1}\frac{ \partial \beta^i(s',o^{0:i})}{\partial \bm{\theta}} Q_\Omega(s',o^{0:i}) \prod_{k=N}^{i+1} \beta^k(s',o^{0:k}) \\
+ \sum_{i=0}^{N - 1}(1-\beta^i(s',o^{0:i})) Q_\Omega(s',o^{0:i}) \bigg(\\
\sum_{j=i+1}^{N-1} \frac{ \partial \beta^j(s',o^{0:j})}{\partial \bm{\theta}} \prod_{\substack{k=N \\ k \neq j}}^{i+1} \beta^k(s',o^{0:k}) \bigg)
\bigg],
\end{split}
\end{equation}

As in \cite{hoc} we can further condense our expression by noting that the generalized advantage function over a hierarchical set of options can be defined as $A_\Omega(s',o^{0:\ell}) = Q_\Omega(s',o^{0:\ell})  - \sum_{i=0}^{\ell - 1}(1-\beta^i(s',o^{0:i})) Q_\Omega(s',o^{0:i}) [\prod_{k=i+1}^{\ell - 1} \beta^k(s',o^{0:k})]$. We replace the previous terms into the previous equation.

\begin{equation} 
\begin{split}
\frac{\partial Q_\Omega(s,o^{0:N-1})}{\partial \bm{\theta}} + \frac{\partial J(\pi)}{\partial \bm{\theta}}\\
= \sum_{a} \frac{ \partial \pi^N(a|s,o^{0:N-1})}{\partial \bm{\theta}} Q_U(s,o^{0:N-1},a) \\
+  \sum_{s'} \sum_{o'^{0:N-1}} P^{(1)}(s',o'^{0:N-1}|s,o^{0:N-1}) \frac{\partial Q_\Omega(s',o'^{0:N-1})}{\partial \bm{\theta}} \\
+ \sum_{a} \pi^N(a|s,o^{0:N-1})  \sum_{s'} P (s'|s,a) \bigg[ \\
\sum_{i=0}^{N - 1}(1-\beta^i(s',o^{0:i})) \prod_{k=N}^{i+1} \beta^k(s',o^{0:k}) \sum_{o^{i+1:N-1}}  \sum_{j=i+1}^{N-1} \bigg[\\ \frac{ \partial \pi^j(o^j|s',o^{0:j-1})}{\partial \bm{\theta}} \prod_{\substack{k=i+1 \\ k \neq j}}^{N} \pi^k(o^k|s',o^{0:k-1}) Q_\Omega(s',o^{0:N-1}) \bigg]\\
- \sum_{i=0}^{N - 1}\frac{ \partial \beta^i(s',o^{0:i})}{\partial \bm{\theta}} A_\Omega(s',o^{0:i}) \prod_{k=N}^{i+1} \beta^k(s',o^{0:k}) \bigg],
\end{split}
\end{equation}

We can also condense the terms related to the gradient of $\pi^\ell$ as delineated in \cite{weightsharingAAAI}:

\begin{equation} 
\begin{split}
\frac{\partial Q_\Omega(s,o^{0:N-1})}{\partial \bm{\theta}} + \frac{\partial J(\pi)}{\partial \bm{\theta}}\\
= \sum_{a} \frac{ \partial \pi^N(a|s,o^{0:N-1})}{\partial \bm{\theta}} Q_U(s,o^{0:N-1},a) \\
+  \sum_{s'} \sum_{o'^{0:N-1}} P^{(1)}(s',o'^{0:N-1}|s,o^{0:N-1}) \frac{\partial Q_\Omega(s',o'^{0:N-1})}{\partial \bm{\theta}} \\
+ \sum_{s'} P(s'|s,o^{0:N-1}) \bigg[
\sum_{o^{0:N-1}}  \sum_{j=1}^{N-1}  \frac{ \partial \pi^j(o^j|s',o^{0:j-1})}{\partial \bm{\theta}}\\ Q_\Omega(s',o^{0:N-1}) [\prod_{k=N}^{\ell} \beta^k(s',o^{0:k})] P_{\pi,\beta}(o'^{0:\ell-1}|s',o^{0:\ell-1})\\
- \sum_{i=0}^{N - 1}\frac{ \partial \beta^i(s',o^{0:i})}{\partial \bm{\theta}} A_\Omega(s',o^{0:i}) \prod_{k=N}^{i+1} \beta^k(s',o^{0:k}) \bigg],
\end{split}
\end{equation}

We rearrange and multiply both sides with the stationary distribution $\sum_{s'} \sum_{o'^{0:N-1}}  d_\pi(s',o'^{0:N-1}) $, and cancel the $\sum_{s'} \sum_{o'^{0:N-1}}  d_\pi(s',o'^{0:N-1}) \frac{\partial Q_\Omega(s,o^{0:N-1})}{\partial \bm{\theta}}$ terms on both sides:

\begin{equation}
\begin{split}
\frac{\partial J(\pi)}{\partial \bm{\theta}}
=\sum_{s'} \sum_{o'^{0:N-1}}  d_\pi(s',o'^{0:N-1}) 
\sum_{s'} P(s'|s,o^{0:N-1}) \bigg[ \\
\sum_{a} \frac{ \partial \pi^N(a|s,o^{0:N-1})}{\partial \bm{\theta}} Q_U(s,o^{0:N-1},a)\\
- \sum_{\ell=1}^{N-1} \frac{\partial \beta^\ell(s',o^{0:\ell})}{\partial \bm{\theta}} A_\Omega(s',o^{0:\ell}) [ \prod_{k=N}^{\ell+1} \beta^k(s',o^{0:k}) ] \\
+   \sum_{o'^{0:N-1}} \sum_{\ell=1}^{N-1} \frac{\partial \pi^\ell(o'^\ell|s',o'^{0:\ell-1})}{\partial \bm{\theta}} Q_\Omega(s',o'^{0:\ell}) \\
[\prod_{k=N}^{\ell} \beta^k(s',o^{0:k}) ]  P_{\pi,\beta}(o'^{0:\ell-1}|s',o^{0:\ell-1})
\bigg].
\end{split}
\end{equation}

Finally, we define $\mu_\Omega$ as a discounted weighting of augmented state tuples along steady state trajectories: $\mu_\Omega(s,o^{0:N-1},s') = d_\pi(s',o'^{0:N-1})  P(s_t=s, o_t^{0:N-1}=o^{0:N-1},s_{t+1}=s'|s,o_0^{0:N-1})$. $P_{\pi,\beta}(o'^{0:\ell-1}|s',o^{0:\ell-1})$ is the probability while at the next state and terminating the options for the last state that the agent arrives at a particular set of next option selections.

\begin{equation}
\begin{split}
\frac{\partial J(\pi)}{\partial \bm{\theta}} 
=\sum_{s,o^{0:N-1},s'}  \mu_\Omega(s,o^{0:N-1},s')  \bigg(\\
\sum_{a} \frac{\partial \pi(a|s,o^{0:N-1})}{\partial \bm{\theta}} Q_U(s,o^{0:N-1},a) \\ 
+ \sum_{o'^{0:N-1}} \sum_{\ell=1}^{N-1} \frac{\partial \pi^\ell(o'^\ell|s',o'^{0:\ell-1}) }{\partial \bm{\theta}} Q_\Omega(s',o'^{0:\ell})\\
\prod_{k=N-1}^{\ell} \beta^k(s',o^{0:k})  P_{\pi,\beta}(o'^{0:\ell-1}|s',o^{0:\ell-1}) \\
- \sum_{\ell=1}^{N-1} \frac{\partial \beta^\ell(s',o^{0:\ell})}{\partial \bm{\theta}} A_\Omega(s',o^{0:\ell}) \prod_{k=N-1}^{\ell+1} \beta^k(s',o^{0:k}) \bigg).
\end{split}
\end{equation}

\section{Convergence Proofs}
\cite{bhatnagar2009natural} to proved that if the following three conditions are true, then we can prove that the coupled stochastic recursion will converge with probability one.

\begin{enumerate}
  \item $\sup_{t} ||X_{t}||$, $\sup_{t} ||Y_{t}|| \leq \infty$
  \item  $\Dot{X}=f(X(t),Y)$ has a globally asymptotically stable equilibrium $\mu(Y)$ where $\mu(.)$ is a Lipschitz continuous function.
  \item The slower timescale, ie. Y, assumes that the faster timescale converges quicker in comparison. Thus, it can be modelled as an equation where the X values instantaneously converge to the corresponding values dictated by Y. The corresponding equation $\Dot{Y}=g(\mu(Y(t)),Y(t))$ should have a globally asymptotically stable equilibrium Y*.
\end{enumerate}

We investigate the convergence proofs for the case of N=2; however, the arguments made below can be easily adapted to any N. Let $F_{2}(t) = \sigma(\theta_{r}, r \leq t)$ denote the sequence of $\sigma$-fields generated by $\theta_{r}$, $ r \geq 0$. $\Psi^{\pi}$ is the sequence the value of $\Psi$ when the critic converges to the actor's policy. The update equation is
\begin{equation} 
\begin{split}
  \Psi &= Q_U(s_{t},o_{t},a_{t}) \psi_{s,o,a} -
 A_\Omega(s_{t+1},o_{t}) \psi_{\beta} +\\
 &Q_\Omega(s_{t+1},o_{t+1}) \beta(s_{t+1},o_{t}) \psi_{s,o}  \\
\theta_{t+1} &= \Gamma \bigg[\theta_{t} + \alpha_{t} \Psi
 \bigg]
\end{split}
\end{equation}

Which we will write as 
\begin{equation} 
\begin{split}
\theta_{t+1} = \Gamma \bigg[\theta_{t} + \alpha_{t} E[\Psi^{\pi}| F_{2}(t)] +\\
\alpha_{t} (\Psi - E[\Psi| F_{2}(t)]) + \alpha_{t}E[\Psi-\Psi^{\pi}| F_{2}(t)]
 \bigg]
\end{split}
\end{equation}
Since the critic converges faster, we can claim $E[\Psi-\Psi^{\pi}| F_{2}(t)] = o(1)$. Now, let

\begin{equation} 
\begin{split}
\mathcal{M}^{2}(t) = \sum_{r=0}^{t-1}  \alpha_{r} (\Psi - E[\Psi| F_{2}(t)]), t \geq 1
\end{split}
\end{equation}
Using the assumptions (1-3), we can conclude that the martingale sequence {$\mathcal{M}^{2}(t)$} is convergent. Thus, using the results from the martingale theory, for any $T > 0$, with $n_{T} = min{\sum_{r=0}^{t-1}  \alpha_{r} \geq T}$, we have that $ \sum_{r=0}^{t-1}  \alpha_{r} (\Psi - E[\Psi| F_{2}(t)]) \to 0$ asymptotically as $n \to \infty$. 

Next, we want to simplify $ E[\Psi^{\pi}| F_{2}(t)]$.

\begin{equation} 
\begin{split}
E [ \Psi^{\pi}| F_{2}(t)] = E \bigg[ Q_U(s_{t},o_{t},a_{t}) \psi_{s,o,a} -
 A_\Omega(s_{t+1},o_{t}) \psi_{\beta} + \\
 Q_\Omega(s_{t+1},o_{t+1}) \beta(s_{t+1},o_{t}) \psi_{s,o}| F_{2}(t) \bigg] \\
 = \sum_{s_{t},o_{t}} d^{\pi_{t}}(s,o) \sum_{a_{t}} \bigg( Q_U(s_{t},o_{t},a_{t}) \psi_{s,o,a} -
 A_\Omega(s_{t+1},o_{t}) \psi_{\beta} + \\
 Q_\Omega(s_{t+1},o_{t+1}) \beta(s_{t+1},o_{t}) \psi_{s,o}| F_{2}(t) \bigg); \text{where} \\
 \psi_{\beta} = \nabla log( \beta(s,o)) \\
\psi_{s,o} = \nabla log(\pi_\Omega(o|s)) \\
\psi_{s,o,a} =\nabla log(\pi(a|s,o)) 
\end{split}
\end{equation}

We now define $e^{\pi_{t}}$ using the following equation:

\begin{equation} 
\begin{split}
h(\theta_{t})
 &= E \bigg[ Q_U(s_{t},o_{t},a_{t}) \psi_{s,o,a} -
 A_\Omega(s_{t+1},o_{t}) \psi_{\beta} \\+ &
 Q_\Omega(s_{t+1},o_{t+1}) \beta(s_{t+1},o_{t}) \psi_{s,o}| F_{2}(t) \bigg]\\
&= \nabla J(\pi) + e^{\pi_{t}}
\end{split}
\end{equation}
Now, we have to prove that $h(\theta_{t})$ is Lipschitz continuous by showing that $\exists M$ such that $ |\nabla h(\theta_{t})| < M $. This is where the linear approximations prove to be essential. If we take the linear approximation of the $A_\Omega(s_{t+1},o_{t})$ and $ Q_\Omega(s_{t+1},o_{t+1})$, then we can clearly put a max bound on them. Any matrix can be given an upper bound by simply making all the elements equal to the max element of the matrix. It can similarly be proven for all the other terms that they have bounded derivatives and are continuously differentiable. Thus, $h(\theta_{t})$ is Lipschitz continuous and the ODE is well posed.

Let $n(t) = \sum_{r=0}^{t-1}\beta_{r}$, $t \geq 1$ with n(0) = 0. Let $I_{t} = [n(t), n(t+1)]$, $t \geq 0$. Let $\Bar{\theta}(s), s \geq 0$, be a continuous linear interpolation of iterates of $\theta_{t}$ over the interval $I_{t}$. Using Gronwall's theorem we can show that for any $\Delta>0$, $\exists s(\Delta)>0$ such that $\Bar{\theta}(s(\Delta)+.)$ is a (T, $\Delta$) perturbation.

Let $\sup_{\pi_{t}} || e^{\pi_{t}} || < \delta $ for a $\delta > 0$. Let $\theta^{s(\Delta)}(t)$, $\hat{\theta}^{s(\Delta)}(t)$ be the solutions of $\Dot{\theta}=\hat{\Gamma}(\nabla J(\pi) + e^{\pi})$, $\Dot{\theta}=\hat{\Gamma}(\nabla J(\pi)) $, respectively, for $t \in [s(\Delta),s(\Delta)+T]$, for a given T > 0, with $\theta^{s(\Delta)}(t)$ = $\hat{\theta}^{s(\Delta)}(t)$ = $\Bar{\theta}(s(\Delta))$. From the foregoing, we have $\sup_{t \in [s(\Delta),s(\Delta)+T]} || \theta^{s(\Delta)}(t) - \Bar{\theta}(t) ||< \Delta $. The trajectories $\theta^{s(\Delta)}(t)$ and $\hat{\theta}^{s(\Delta)}(t)$ of their corresponding ODE's are obtained from integration. If we integrate and subtract, we get:

\begin{equation}
\begin{split}
|| \theta^{s(\Delta)}(t) - \hat{\theta}^{s(\Delta)}(t) || \leq \sup_{\pi_{s}} ||e^{\pi}_{s}||(t-s(\Delta)) \leq T\delta
\end{split}
\end{equation}

Hence we can show that:
\begin{equation}
\begin{split}
\sup_{t \in [s(\Delta),s(\Delta)+T]} || \hat{\theta}^{s(\Delta)}(t) - \Bar{\theta}(t) || &\leq \sup_{t \in [s(\Delta),s(\Delta)+T]} || \theta^{s(\Delta)}(t) - \Bar{\theta}(t) || \\ &+ \sup_{t \in [s(\Delta),s(\Delta)+T]} || \hat{\theta}^{s(\Delta)}(t) - \theta(t) ||\\
&\leq \Delta + T\delta
\end{split}
\end{equation}

Thus, $\Bar{\theta}(s(\Delta)+.)$ is a (T, $\Delta$) perturbation of $\Dot{\theta}=\hat{\Gamma}(\nabla J(\pi)) $.

Finally, we use lemma 6 from \cite{bhatnagar2009natural} to conclude that the coupled stochastic recursions converge with probability one.

\end{document}